\def\BibTeX{{\rm B\kern-.05em{\sc i\kern-.025em b}\kern-.08em
    T\kern-.1667em\lower.7ex\hbox{E}\kern-.125emX}}
\let\originalleft\left
\let\originalright\right
\renewcommand{\left}{\mathopen{}\mathclose\bgroup\originalleft}
\renewcommand{\right}{\aftergroup\egroup\originalright}
\def\eqref#1{equation~\ref{#1}}
\def\ceil#1{\lceil #1 \rceil}
\def\1{\bm{1}}
\def\rf{{\textnormal{f}}}
\def\rg{{\textnormal{g}}}
\def\vone{{\bm{1}}}
\def\vdelta{{\bm{\delta}}}
\def\va{{\bm{a}}}
\def\vb{{\bm{b}}}
\def\vg{{\bm{g}}}
\def\vh{{\bm{h}}}
\def\vs{{\bm{s}}}
\def\vx{{\bm{x}}}
\def\vy{{\bm{y}}}
\def\evs{{s}}
\DeclareMathAlphabet{\mathsfit}{\encodingdefault}{\sfdefault}{m}{sl}
\SetMathAlphabet{\mathsfit}{bold}{\encodingdefault}{\sfdefault}{bx}{n}
\def\sA{{\mathbb{A}}}
\def\sG{{\mathbb{G}}}
\def\sP{{\mathbb{P}}}
\def\sS{{\mathbb{S}}}
\newcommand{\E}{\mathbb{E}}
\newcommand{\abs}[1]{\left|#1\right|}
\newcommand{\normlzero}{L^0}
\newcommand{\normltwo}{L^2}
\newcommand{\loss}[1]{\mathcal{L}\left(#1\right)}
\newcommand{\trans}[1]{#1^{\mathsf{T}}}
\theoremstyle{plain}
\newtheorem{theorem}{Theorem}[section]
\newtheorem{lemma}[theorem]{Lemma}
\theoremstyle{definition}
\theoremstyle{remark}
\begin{document}

\title{Always-Sparse Training by Growing Connections with Guided Stochastic Exploration}

\author{\IEEEauthorblockN{Mike Heddes$^{1,*}$, Narayan Srinivasa$^2$, Tony Givargis$^1$, Alex Nicolau$^1$}
\IEEEauthorblockA{$^{1}$Department of Computer Science, University of California, Irvine, USA\\
$^{2}$Intel Labs, Santa Clara, USA\\
$^{*}$Corresponding author: mheddes@uci.edu}
}

\maketitle

\begin{abstract}%
The excessive computational requirements of modern artificial neural networks (ANNs) are posing limitations on the machines that can run them. 
Sparsification of ANNs is often motivated by time, memory and energy savings only during model inference, yielding no benefits during training. A growing body of work is now focusing on providing the benefits of model sparsification also during training. While these methods greatly improve the training efficiency, the training algorithms yielding the most accurate models still materialize the dense weights, or compute dense gradients during training. 
We propose an efficient, always-sparse training algorithm with excellent scaling to larger and sparser models, supported by its linear time complexity with respect to the model width during training and inference.
Moreover, our guided stochastic exploration algorithm improves over the accuracy of previous sparse training methods.
We evaluate our method on the CIFAR-10/100 and ImageNet classification tasks using ResNet, VGG, and ViT models, and compare it against a range of sparsification methods\footnote[3]{The source code is available at:\\ \url{https://github.com/mikeheddes/guided-stochastic-exploration}}. 

\end{abstract}

\begin{IEEEkeywords}
dynamic sparse training, neural network pruning, sparse neural networks
\end{IEEEkeywords}

\section{Introduction}
\label{sec:introduction}

Artificial neural networks (ANNs) are currently the most prominent machine learning method due to their unparalleled performance in a wide range of applications, including computer vision~\cite{voulodimos2018deep, o2019deep}, natural language processing~\cite{young2018recent, otter2020survey}, and reinforcement learning~\cite{arulkumaran2017deep, schrittwieser2020mastering}, among many others~\cite{liu2017survey, wang2020recent, zhou2020graph}. 
To improve their representational powers, ANNs continue to increase in size~\cite{neyshaburrole, kaplan2020scaling}. 
Recent large language models, for example, have widths exceeding $10{,}000$ units and total parameter counts over 100 billion~\cite{brown2020language, rae2021scaling, chowdhery2022palm}.
However, with their increasing sizes, the memory and computational requirements to train and make inferences with these models become a limiting factor \cite{hwang2018computational, ahmed2020democratization}. 

\begin{figure}[t!]
    \centering
    \scalebox{0.85}{
    \begin{tikzpicture}

\tikzstyle{node}=[circle, draw, line width=1pt, minimum size=0.75cm, inner sep=0pt, font=\small]
\tikzstyle{active}=[line width=1.5pt]
\tikzstyle{inactive}=[dashed, gray]

\node[node, fill=blue!30] (L1) {1};
\node[node, fill=blue!30, below=0.4cm of L1] (L2) {2};
\node[node, fill=blue!30, below=0.4cm of L2] (L3) {3};

\node[node, fill=red!30, right=2cm of L1] (R1) {1};
\node[node, fill=red!30, below=0.4cm of R1] (R2) {2};
\node[node, fill=red!30, below=0.4cm of R2] (R3) {3};

\draw[active] (L1) -- (R2);
\draw[active] (L2) -- (R2);
\draw[active] (L2) -- (R3);
\draw[active] (L3) -- (R1);

\draw[inactive] (L1) -- (R1);
\draw[inactive] (L1) -- (R3);
\draw[inactive] (L2) -- (R1);
\draw[inactive] (L3) -- (R2);
\draw[inactive] (L3) -- (R3);

\node[right=0.3cm of R2] {\Large$\rightarrow$};

\node[node, fill=blue!30, right=4cm of L1] (RL1) {1};
\node[node, fill=blue!30, below=0.4cm of RL1] (RL2) {2};
\node[node, fill=blue!30, below=0.4cm of RL2] (RL3) {3};

\node[node, fill=red!30, right=2cm of RL1] (RR1) {1};
\node[node, fill=red!30, below=0.4cm of RR1] (RR2) {2};
\node[node, fill=red!30, below=0.4cm of RR2] (RR3) {3};

\draw[active] (RL1) -- (RR2);
\draw[active] (RL2) -- (RR3);
\draw[active] (RL3) -- (RR1);
\draw[active] (RL3) -- (RR2);

\draw[inactive] (RL1) -- (RR1);
\draw[inactive] (RL2) -- (RR2);
\draw[inactive] (RL1) -- (RR3);
\draw[inactive] (RL2) -- (RR1);
\draw[inactive] (RL3) -- (RR3);

\node[below=0.5cm of L3, xshift=1.5cm] (Aleft) {$A = [(1,2), (2,2), (2,3), (3,1)]$};
\node[below=0.0cm of Aleft] (thetaLeft) {$\theta = [-0.48, -0.14, -0.46, 0.73]$};

\node[below=0.5cm of RL3, xshift=1.5cm] (Aright) {$A = [(1,2), (2,3), (3,1), (3,2)]$};
\node[below=0.0cm of Aright] (thetaRight) {$\theta = [-0.48, -0.46, 0.73, 0.00]$};

\node[above=0.5cm of L1, xshift=-0.5cm] (legend1) {};
\draw[active] (legend1) -- ++(0.75,0) node[right, black] {Active connection};
\node[above=0.5cm of RL1, xshift=-0.5cm] (legend2) {};
\draw[inactive] (legend2) -- ++(0.75,0) node[right, black] {Inactive connection};

\end{tikzpicture}}
    \caption{Illustration of the connections before (left) and after (right) a prune and grow step in dynamic sparse training on a 3-by-3 feedforward layer. The sparse model parameters are represented by the active connections $A$ and their weights $\theta$. Connection $(2,2)$ is pruned because its weight has the lowest magnitude. The connection $(3, 2)$ is grown and its weight will hereafter be optimized with stochastic gradient descent.}
    \label{fig:prune-grow-illustration}
\end{figure}

A large body of work has addressed problems that arise from the immense model sizes~\cite{reed1993pruning, gale2019state, blalock2020state, hoefler2021sparsity}. Many studies look into sparsifying the model weights based on the observation that the weight distribution of a dense model is often concentrated around zero at the end of training, indicating that the majority of weights contribute little to the function being computed \cite{han2015learning}. Using sparse matrix representations, the model size and the number of floating-point operations (FLOPs) can be dramatically reduced.
Moreover, previous work has found that for a fixed memory or parameter budget, larger sparse models outperform smaller dense models \cite{zhu2017prune, kalchbrenner2018efficient}.

Early works in ANN sparsity removed connections, a process called \textit{pruning}, of a trained dense model based on the magnitude of the weights \cite{janowsky1989pruning, strom1997sparse}, resulting in a more efficient model for inference. Later works improved upon this technique \cite{guo2016dynamic, dong2017learning, yu2018nisp}, but all require at least the cost of training a dense model, thus providing no efficiency benefits during training. Consequently, the resulting sparse models are limited in size by the largest dense model that can be trained on a given machine.

In light of the aforementioned limitation, the lottery ticket hypothesis (LTH), surprisingly, hypothesized that there exists a subnetwork within a dense over-parameterized model that, when trained with the same initial weights, will result in a sparse model with accuracy comparable to that of the dense model \cite{frankle2018lottery}. However, the proposed method for finding a \textit{winning ticket} within a dense model is very computationally intensive, as it requires training the dense model (typically multiple times) to obtain the subnetwork. Moreover, later work weakened the hypothesis for larger ANNs \cite{frankle2019stabilizing}. Despite this, it was still an important catalyst for new methods that aim to find the winning ticket more efficiently.

Efficient methods for finding a winning ticket can be categorized as pruning before training and dynamic sparse training (DST). The before-training methods prune connections from a randomly initialized \textit{dense} model \cite{lee2018snip,tanaka2020pruning}. This means that they still require a machine capable of representing and computing with the dense model.
In contrast, DST methods start with a randomly initialized \textit{sparse} model and dynamically change the connections throughout training while maintaining the overall sparsity~\cite{mocanu2018scalable, mostafa2019parameter, evci2020rigging}, as illustrated in Figure~\ref{fig:prune-grow-illustration}.
In practice, DST methods achieve higher accuracy than pruning before training methods (see Section~\ref{sec:comparison-related-work}). 
The first DST method, sparse evolutionary training (SET), periodically prunes the connections with the lowest weight magnitude and \textit{grows} new connections uniformly at random \cite{mocanu2018scalable}. 
Subsequent work, such as RigL and Top-KAST \cite{evci2020rigging, jayakumar2020top}, has improved the accuracy of SET.
However, \textit{RigL periodically computes dense gradients}, and while Top-KAST always computes sparse gradients, \textit{it still materializes dense weights during training}.

Instead, we present a DST algorithm that is \textit{always sparse}, i.e., both the gradients and the weights are sparse at all times. 
The training and inference cost of our method therefore scales with the number of non-zero weights, like SET, but obtains substantially higher accuracy, on par with, and at times even exceeding, that of Top-KAST and RigL.
We improve the accuracy of previous methods using a simple, yet effective method that we call \textit{guided stochastic exploration} (GSE). In short, when new connections are added, our method first efficiently samples a subset of inactive connections for exploration. 
It is then guided by growing the connections with the largest gradient magnitude from the sampled subset. 
The key insight is that when the size of the subset is on the same order as the number of active connections, that is, the weight sparsity and gradient sparsity are similar, then the accuracy of our method surpasses that of previous sparse training methods. 
Our method has a time complexity of $O(n + N)$ with respect to the width of the model $n$, where $N \leq n^2$ is the number of non-zero weights.
When using the Erdős–Rényi sparse model initialization \cite{mocanu2018scalable}, the time complexity is reduced to only $O(n)$. 
We evaluated the classification accuracy of our method on the CIFAR-10/100 and ImageNet datasets using ResNet, VGG, and ViT models and compared it with a range of sparsification methods at 90\%, 95\%, and 98\% sparsity, indicating the percentage of zero weights. 
Our method promises to enable the training of larger and sparser models.

\section{Related work}
\label{sec:related_work}

A wide variety of methods have been proposed that aim to reduce the size of ANNs, such as dimensionality reduction of the model weights \citep{jaderberg2014speeding, novikov2015tensorizing}, and weight quantization~\citep{gupta2015deep, mishra2018wrpn}. 
However, this section specifically covers model sparsification methods as they are the most related to our work.
Following Wang et al. \citep{wang2019picking}, the sparsification methods are categorized as: pruning after training, pruning during training, pruning before training, and dynamic sparse training.

\paragraph{After training} The first pruning algorithms operated on dense trained models, they prune the connections with the smallest weight magnitude \citep{janowsky1989pruning, thimm1995evaluating, strom1997sparse, han2015learning}. This method was later generalized to first-order \citep{mozer1988skeletonization, karnin1990simple, molchanov2019pruning, molchanov2019importance} and second-order \citep{lecun1989optimal, hassibi1992second, frantar2021m} Taylor polynomials of the loss with respect to the weights. These methods can be interpreted as calculating an importance score for each connection based on how its removal will affect the loss~\citep{guo2016dynamic, dong2017learning, yu2018nisp}. 

\paragraph{During training} Gradual pruning increases the sparsity of a model during training untill the desired sparsity is reached \citep{zhu2017prune, liu2021sparse}. Kingma et al. \citep{kingma2015variational} introduced variational dropout which adapts the dropout rate of each unit during training, Molchanov et al. \citep{molchanov2017variational} showed that pruning the units with the highest dropout rate is an effective way to sparsify a model. Louizos et al. \citep{louizos2018learning} propose a method based on the reparameterization trick that allows to directly optimize the $\normlzero$ norm, which penalizes the number of non-zero weights. Alternatively, DeepHoyer is a differentiable regularizer with the same minima as the $\normlzero$ norm \citep{yang2019deephoyer}. Lastly, Powerpropagation is a sparsity-promoting regularizer that raises weights to a constant power \citep{schwarz2021powerpropagation}.

\paragraph{Before training} The lottery ticket hypothesis (LTH) \citep{frankle2018lottery, frankle2019stabilizing} started a line of work that aims to find a sparse model by pruning a dense model before training \citep{liu2018rethinking}. SNIP uses the sensitivity of each connection to the loss as the importance score of a connection \citep{lee2018snip}. GraSP optimizes gradient flow to accelerate training \citep{wang2019picking}, however, Lubana et al. \citep{lubana2020gradient} argue that it is better to preserve gradient flow instead. Gradient flow refers to the norm of the gradients after pruning. Tanaka et al. \citep{tanaka2020pruning} highlight a problem in the aforementioned methods: they suffer from layer collapse in high-sparsity regimes, that is, during the pruning phase all the connections of a layer can be removed, making the model untrainable. They propose SynFlow, which prevents layer collapse by calculating how each connection contributes to the flow of information using a path regularizer \citep{neyshabur2015norm}, similar to Lee et al. \citep{lee2019signal}.

\paragraph{Dynamic sparse training} The methods in this last category, including ours, start with a randomly initialized sparse model and dynamically change the connections throughout training while maintaining the same sparsity. This involves periodically pruning a fraction of active connections and growing the same number of inactive connections. This process is illustrated in Figure~\ref{fig:prune-grow-illustration}. SET was the first method and used magnitude pruning and random growing of connections \citep{mocanu2018scalable}. DeepR assigns a fixed sign to each connection at initialization and prunes those connections whose sign would change during training \citep{bellec2018deep}. DSR prunes connections with the lowest global weight magnitude and uses random growing, allowing the connections to be redistributed over the layers and used in layers where they contribute the most \citep{mostafa2019parameter}. SNFS improves upon the previous methods by using an informed growing criteria: it grows the connections with the largest gradient momentum magnitude \citep{dettmers2019sparse}. RigL makes this more efficient by periodically calculating the dense gradients \citep{evci2020rigging}. 
However, calculating the dense gradients is still a limitation for training very large and sparse models.
Top-KAST addresses this limitation by sparsely updating the dense model using only the connections with the largest weight magnitude \citep{jayakumar2020top}. Although Top-KAST uses sparse gradients, it requires maintaining and updating the dense model from which the sparse model is periodically reconstructed. Moreover, we will show in Section~\ref{sec:comparison-related-work} that Top-KAST achieves lower accuracy at higher sparsities.

\section{Guided stochastic exploration}
\label{sec:method}

Our main contribution is an efficient dynamic sparse training algorithm which is always sparse, that is, at no point is the dense model materialized, all forward passes use sparse weights, and it exclusively calculates sparse gradients.
The sparsity is maintained even while changing the connections because our method only computes gradients for a subset of the inactive connections. 
This subset is randomly sampled using an efficient sampling procedure during each round of growing and pruning.
The connections within the subset with the largest gradient magnitudes are then selected to grow.
We call our method \textit{guided stochastic exploration} (GSE), referring to the stochastic sampling process of the subset and the selection of the connections by largest gradient magnitude, which provides guidance during the exploration.
Although we are not the first to present an always-sparse training method, for example, SET also achieves this, among these methods, ours obtains the highest accuracy---equal or higher than less efficient methods like RigL and Top-KAST. 

\begin{figure}[h]
    \centering
    \begin{tikzpicture}[scale=1, transform shape, font=\small]

    \tikzset{set/.style={draw,circle,inner sep=0pt,align=center}}
    
    \node[set,dotted,text width=3cm,label={[below=75pt of all, xshift=35pt]$W$}] (prune) at (3.5,-0.95) (all) {};
    \node[set,fill=gray!30,text width=1.5cm, label={[below=25pt of active, xshift=0pt]$A$}] (active) at (3.5,-0.3)  {};
    \node[set,fill=white,text width=0.6cm] (prune) at (3.5,0) {$P$};
    \node[set, text width=1.2cm,label={[below=5pt of sub, xshift=23pt]$S$}] (sub) at (3.5,-1.75) {};
    \node[set,fill=gray!30,text width=0.6cm] (grow) at (3.5,-1.9) {$G$};

    \node[right=0.75cm of all, text width=0.3cm] (symb) {$W$\\$A$\\$S$\\$P$\\$G$};
    \node[right=0cm of symb, text width=5cm] (legend) {All connections\\Active connections\\Subset of inactive connections\\Pruned connections\\Grown connections};

\end{tikzpicture}
    \caption{
    Illustration of the relations between the connection sets. 
    The solid area becomes the next active set. The set $W$ is dotted to indicate that it is never materialized.
    }
    \label{fig:weight-sets}
\end{figure}

We will use the following notation and definitions. With $W = \{1, 2, \dots, n\} \times \{1, 2, \dots, n\}$ we denote the set of all possible connections, which is never materialized; $A \subseteq W$ is the set of active connections whose weights are optimized with gradient descent, while the inactive connections $W \setminus A$ have an implicit weight of zero; $S \subseteq (W \setminus A)$ is a subset of inactive connections; $G \subseteq S$ is the set of grown connections; and $P \subseteq A$ is the set of pruned connections.
Figure~\ref{fig:weight-sets} illustrates the relations between the various connection sets. 
Note that we abuse the notation of connection sets to be per layer or global interchangeably.

GSE is designed to incorporate the strengths of both SET and RigL.
While RigL typically outperforms SET, Liu et al. \citep{liu2021we} showed that SET can outperform RigL when training is significantly extended. This is because RigL only explores a fraction of the connections compared to SET.
The two baseline methods can be regarded as opposing extremes in their exploration strategy.
On one end, SET conducts unbiased exploration, enabling it to be highly efficient. On the opposite end, RigL employs a greedy strategy by selecting connections which appear to get the largest weight magnitudes.
The exploration dynamics of GSE are, in turn, determined by the interaction of the subset and the grow set. 
When the subset is much larger than the grow set, in the limit including all possible connections, then the exploration is nearly greedy since the sampling process has little influence over which connections are selected. 
On the other hand, when the sets are equal in size, then the grown connections are fully determined by the subset sampling process. 
The exploration strategy of GSE is thus a hybrid of random exploration guided by the selection of the largest gradient magnitude.

In the remainder of this section, we will discuss which probability distributions were considered for sampling the subset and how to efficiently sample from them in the case of feedforward layers. 
Our method naturally applies to convolutional layers by interpreting them as batched feedforward layers. See Appendix~\ref{sec:conv-layer-subset} for an extended discussion.
We conclude this section with our complete dynamic sparse training algorithm and its time complexity.

\subsection{Efficient subset sampling}
\label{sec:subset-sampling}

For our method to be always-sparse, it is critical that we can sample a subset of the inactive connections during the grow step without explicitly representing a probability for each possible connection.
Note that a connection $(a, b) \in W$ is fully specified by the input and output units it connects. Thus, to ensure an efficient sampling process, we can sample connections at random by independently sampling from distributions over the input and output units of a layer, which induces an implicit joint probability distribution over all the connections in that layer. 
With this process the probability of sampling a connection is never explicitly represented, just the probability of sampling each unit is represented which requires only linear, instead of quadratic, memory and $O(n)$ time to draw $n$ samples using the alias method~\citep{vose1991linear}. 

Formally, connections are sampled from the discrete probability distributions over the input and output units of the $l$-th layer: $a_i \sim \rf_l$ and $b_i \sim \rg_l$.
Since this process could sample duplicate or active connections, the set difference $S \gets S \setminus A$ with the active set is computed in $O(n)$ time using hash tables, resulting in a subset of connections whose size is upper bounded by the number of samples drawn, containing only inactive connections.

To sample the subset $S$, perhaps the simplest distribution is uniform over the set of all possible connections $W$. This reflects an unbiased exploration of the connections.
However, the distributions $\rf$ and $\rg$ do not have to be uniform, it might instead be useful to bias the subset towards connections that will contribute more at decreasing the loss, that is, those that have a large gradient magnitude. 
To this end, we investigate two other distributions. The first distribution is called GraBo, its probabilities are proportional to an upper bound of the gradient magnitude, given as follows: 
\begin{align*}
    \rf_l \coloneqq \frac{\abs{\vh_{l-1}}\vone}{\trans{\vone}\abs{\vh_{l-1}}\vone}, \quad \rg_l \coloneqq \frac{\abs{\vdelta_{l}}\vone}{ \trans{\vone}\abs{\vdelta_{l}}\vone}
\end{align*}
where $\vone$ is an all-ones vector, and $\vh_{l}, \vdelta_{l} \in \mathbb{R}^{n_{l}\times B}$ are the activations and gradients of the loss at the output units of the $l$-th layer, respectively. Here, the batch size is denoted as $B$ and $n_{l}$ is the number of units in the $l$-th layer.
The second distribution is called GraEst, its probabilities are proportional to an unbiased estimate of the gradient magnitude, given as follows:
\begin{align*}
        \rf_{l} \coloneqq \frac{\abs{\vh_{l-1}\vs}}{\trans{\vone}\abs{\vh_{l-1}\vs}}, \quad \rg_{l} \coloneqq \frac{\abs{\vdelta_{l}\vs}}{ \trans{\vone}\abs{\vdelta_{l}\vs}}
\end{align*}
where $\vs \in \{-1, +1\}^B$ is a vector of random signs. 
To compute the probabilities, both distributions can reuse the computation of the activations and the gradients during the forward and backward pass, respectively.
A more detailed discussion of the distributions is provided in Appendix~\ref{sec:sampling-distributions}.
In Section~\ref{sec:size-subset}, we assess which of these probability distributions is the most appropriate for sparse training.

\subsection{Dynamic sparse training}

Our complete dynamic sparse training algorithm starts by initializing each layer as a random bipartite graph using the Erdős–Rényi random graph generation algorithm~\citep{erdos59a}. The number of active connections $\abs{A}$ of the $l$-th layer is given by $\ceil{\epsilon(n_{l-1} + n_{l})}$, where $\epsilon > 0$ is a parameter that controls the sparsity of the model \citep{mocanu2018scalable}. This initialization ensures that the number of active connections in a layer is proportional to its width, that is, $\abs{A} = O(n)$. Moreover, the Erdős–Rényi initialization also achieves better accuracy than uniform sparsity assignment \citep{evci2020rigging}, as verified in Appendix~\ref{sec:layer-sparsity-distribution}. Once the initial connections are sampled, the weights are initialized so that the activations are normally distributed \citep{evci2019difficulty}.

\begin{algorithm}
\caption{Efficiently growing and pruning connections}
\label{alg:growing}
\begin{algorithmic}
\REQUIRE training step $t$, prune-grow schedule $\alpha, T_{\mathrm{end}}$, probability distributions $\rf$ and $\rg$, subset sampling factor $\gamma$, set of active connections $\sA$, and their weights $\theta$.
\STATE $\sS \gets \mathrm{sample\_connections}(\rf, \rg, \ceil{\gamma\abs{\sA}}) \setminus \sA$ %
\STATE $\alpha_t \gets \mathrm{cosine\_decay}(t; \alpha, T_{\mathrm{end}})$ %
\STATE $k \gets \min(\ceil{\alpha_t\abs{\sA}}, \abs{\sS})$ %
\STATE $\sG \gets \mathrm{topk}(\abs{\mathrm{grad}(\sS)}, k)$ %
\STATE $\sP \gets \mathrm{topk}(-\abs{\theta}, k)$ %
\STATE $\sA \gets (\sA \setminus \sP) \cup \sG$ %
\STATE $\theta \gets \mathrm{update\_weights}(\theta, \sA, \sP, \sG)$ %
\end{algorithmic}
\end{algorithm}

During training, the weights of the active connections are optimized using stochastic gradient descent (SGD). Every $T$ training steps new connections are grown, which are added with a weight of zero, and the same number of connections are pruned. We provide pseudocode for our pruning and growing procedure in Algorithm~\ref{alg:growing}, which operates on a single layer for simplicity. The maximum fraction of pruned active connections is cosine annealed from $\alpha \in (0, 1)$ at the start of training to zero at $T_{\mathrm{end}}$.
This ensures that there is more exploration in the early stages of training.

To determine the set of connections to grow $G$, we start by evaluating one of the probability distributions described in Section~\ref{sec:subset-sampling}. Then, $\ceil{\gamma\abs{A}}$ connections are sampled from the input and output distributions, where $\gamma > 0$ is a hyperparameter that adjusts the number of samples. The subset $S$ contains all randomly sampled connections that are not in the active set $A$. Now that the subset of inactive connections is determined, their gradient magnitude is computed. The grow set contains the largest $k$ gradient magnitude connections from the subset, where $k$ is the minimum between $\ceil{\alpha_t\abs{A}}$ and $\abs{S}$ to ensure that the grow set is not larger than the subset. The $k$ largest elements can be found in $O(n)$ time using the introspective selection algorithm~\citep{musser1997introspective}.

The pruned set $P$, in turn, contains the $k$ active connections with the smallest weight magnitude, also determined in $O(n)$ time. 
Although magnitude pruning has been observed to cause layer collapse among pruning before training methods,
the training process implicitly regularizes the weight magnitudes which minimizes the risk of layer collapse for DST methods like ours \citep{tanaka2020pruning}. 
Lastly, the active set and the weights are updated to reflect the changed connections. Pseudocode for the entire training process is provided in Appendix~\ref{sec:pseudocode}. It is important to stress that the dense model is not materialized at any point, only the active connections are represented, and all the operations are always sparse.

\section{Experiments}
\label{sec:experiments}

\begin{figure*}[t!]
    \centering
    \scalebox{0.9}{\subimport*{figures/}{subset-size-s98.pgf}}
    \caption{Accuracy of each distribution while increasing the number of subset samples (bounding the size of the subset) compared against RigL at 98\% sparsity.}
    \label{fig:subset-mul-98}
\end{figure*}

We evaluated our method and compared it to baselines on the CIFAR-10 and CIFAR-100 datasets \citep{krizhevsky2009learning}, in addition to the ILSVRC-2012 ImageNet dataset \citep{deng2009imagenet}. 
To establish how each method compares with different model architectures, we experimented with two convolutional neural networks (CNNs), ResNet-56~\citep{he2016deep} and a 4 times downscaled version of VGG-16 \citep{simonyan2014very}, in addition to ViT~\citep{dosovitskiy2020image} modified for small datasets~\citep{beyer2022better} so that all models have roughly the same number of parameters. 
The bias and normalization parameters are kept dense since they only contribute marginally to the size and cost of a model. 
Like Wang et al. \citep{wang2019picking}, we evaluated all methods at 90\%, 95\%, and 98\% sparsity.
The high levels of sparsity emphasize the differences between the methods.
In addition, in Appendix~\ref{sec:sparse-benchmark}, we show that sparse matrix formats for unstructured sparsity become more efficient for matrix multiplication from 90\% sparsity compared to dense matrix formats, further motivating the need for such high levels of sparsity.
We repeat each experiment three times with random initializations. The plots report the mean and the 95th percentile, while the tables report the mean and standard deviation. We adopt the implementation of the baselines based on their available code.

All experiments use the same optimization settings to isolate the differences in sparsification. Following Evci et al. \citep{evci2020rigging} and Lubana et al. \citep{lubana2020gradient}, we use SGD with a momentum coefficient of 0.9, an $\normltwo$ regularization coefficient of 0.0001, and an initial learning rate of 0.1 which is dropped by a factor of 10 at epochs 80 and 120. The ViT experiments have one exception: they use an initial learning rate of 0.01. We use a batch size of 128 and train for 200 epochs. We also apply standard data augmentation to the training data, described in Appendix~\ref{sec:data-augmentation}.

The LTH procedure proposed by Frankle et al. \citep{frankle2018lottery}, denoted as Lottery in the results, uses iterative pruning with iterations at 0\%, 50\%, 75\%, 90\%, 95\% and 98\% sparsity. The gradual pruning method by Zhu et al. \citep{zhu2017prune}, denoted as Gradual in the results, reaches the target sparsity at the second learning rate drop and prunes connections every 1000 steps. Following Tanaka et al. \citep{tanaka2020pruning}, the pruning before training methods use a number of training samples equal to ten times the number of classes to prune the dense model to the target sparsity. Except for SynFlow, which uses 100 iterations with an exponential decaying sparsity schedule to prune the dense model. All DST methods use the same update schedule: the connections are updated every 1000 steps, and similar to Evci et al. \citep{evci2020rigging}, the fraction of pruned active connections $\alpha$ is cosine annealed from 0.2 to 0.0 at the second learning rate drop. This is because Liu et al. \citep{liu2021sparse} showed that DST methods struggle to recover from pruning when the learning rate is low. The pruning and growing procedures are applied globally for all methods, maintaining the overall sparsity. This approach generally outperforms local, layer-wise sparsity \citep{mostafa2019parameter}.

\subsection{Effect of subset sample size}
\label{sec:size-subset}

First, we want to determine how the size of the subset affects the accuracy. In addition, we are interested in comparing the accuracy obtained by the three distributions: uniform, gradient upper bound (GraBo), and gradient estimate (GraEst). The number of subset samples (which is a bound on the subset size) is set proportional to the number of active connections $\abs{A}$, with $\abs{S} \leq \ceil{\gamma\abs{A}}$ and $\gamma \in \{0.25, 0.5, 1, 1.5, 2\}$. 
The results are shown in the first three columns of Figure~\ref{fig:subset-mul-98} and include the baseline accuracy of RigL for comparison. 

We see that the accuracy of RigL is consistently matched, or even surpassed, when the number of subset samples is equal to the number of active connections, that is, $\gamma = 1$. 
This indicates that GSE is more effective at exploring the network connections than both SET and RigL.
We observe the same overall trend across all distributions, showing that the success of our method is not reliant on the specific characteristics of the sampling distribution.
Moreover, the accuracy quickly peaks as $\gamma>1$, validating that it is sufficient for the subset size to be on the same order as the active set size. 

We expect that as the subset size of GSE increases, the accuracy eventually drops back to the level of RigL because the largest gradient magnitude selection then becomes the dominant criteria.
To this end, we experimented with increasing the number of samples to $\gamma=100$ in the last column of Figure~\ref{fig:subset-mul-98}, note the logarithmic x-axis. 
The observations align precisely with our expectations, which further underscores that greedy exploration leads to inferior solutions when compared to the hybrid exploration strategy of GSE.

\begin{table*}[t!]
    \setlength{\tabcolsep}{5pt}
    \centering
    \caption{Accuracy comparison with related work on the CIFAR datasets.}
    \label{tab:method-comparison}
    \begin{sc}
    \begin{tabular}{l|ccc|ccc|ccc|ccc|ccc|ccc}
        \toprule
        Dataset & \multicolumn{9}{c|}{CIFAR-10} & \multicolumn{9}{c}{CIFAR-100} \\
        \midrule
        Model & \multicolumn{3}{c|}{ResNet-56} & \multicolumn{3}{c|}{ViT} & \multicolumn{3}{c|}{VGG-16} & \multicolumn{3}{c|}{ResNet-56} & \multicolumn{3}{c|}{ViT} & \multicolumn{3}{c}{VGG-16}\\
        Dense & \multicolumn{3}{c|}{92.7} & \multicolumn{3}{c|}{85.0} & \multicolumn{3}{c|}{89.0} & \multicolumn{3}{c|}{70.1} & \multicolumn{3}{c|}{58.8} & \multicolumn{3}{c}{62.3}\\
        Sparsity & 90\% & 95\% & 98\% & 90\% & 95\% & 98\% & 90\% & 95\% & 98\% & 90\% & 95\% & 98\% & 90\% & 95\% & 98\% & 90\% & 95\% & 98\%\\
        \midrule
        Lottery          & 89.8 & 87.8 & 84.6 & 85.4 & 85.1 & 72.3 & 88.2 & 86.7 & 83.3 & 63.6 & 58.6 & 50.4 & 57.2 & 57.0 & 42.6 & 55.9 & 49.7 & 37.1 \\
        Gradual & 91.3 & 89.9 & 88.0 & 83.5 & 79.3 & 72.2 & 89.1 & 88.2 & 85.8 & 67.2 & 64.4 & 57.1 & 55.6 & 53.3 & 43.6 & 60.1 & 57.0 & 44.5 \\ 
        \midrule
        \midrule
        Static           & 90.1 & 88.3 & 84.4 & 79.1 & 75.3 & 68.7 & 87.1 & 84.5 & 79.0 & 63.5 & 55.4 & 36.6 & 53.9 & 48.0 & 38.0 & 54.1 & 44.2 & 27.8 \\
        SNIP             & 89.6 & 87.8 & 82.9 & 75.1 & 70.4 & 63.8 & 88.3 & 86.8 & 10.0 & 62.6 & 55.5 & 41.7 & 49.7 & 44.1 & 34.6 & 52.3 & 39.1 & 1.0 \\
        GraSP            & 89.7 & 88.7 & 84.6 & 62.2 & 65.3 & 62.3 & 86.4 & 85.3 & 82.0 & 61.8 & 56.9 & 42.3 & 40.5 & 39.6 & 32.6 & 53.4 & 45.7 & 30.1 \\
        SynFlow          & 90.3 & 88.0 & 83.8 & 78.6 & 75.7 & 69.7 & 87.5 & 85.4 & 78.7 & 60.5 & 50.6 & 31.4 & 52.9 & 48.5 & 37.9 & 52.1 & 41.8 & 24.8 \\
        \midrule
        RigL             & 90.6 & 89.5 & 86.7 & 79.9 & 76.2 & 70.6 & 88.5 & 87.1 & 83.7 & 65.7 & 62.3 & 52.8 & \textbf{54.6} & \textbf{49.0} & 40.4 & 57.0 & 51.8 & 39.9 \\
        Top-KAST & 89.8 & 88.1 & 85.2 & 75.0 & 68.3 & 55.0 & 86.9 & 84.8 & 80.5 & 62.5 & 58.3 & 40.4 & 51.1 & 43.3 & 28.2 & 54.4 & 47.0 & 32.2 \\
        \midrule
        SET              & 90.2 & 88.8 & 85.5 & 79.4 & 75.2 & 68.8 & 86.8 & 84.8 & 79.9 & 65.0 & 60.5 & 49.7 & 54.3 & 48.2 & 38.7 & 54.7 & 47.9 & 32.4 \\
        GSE (\textit{ours})    & \textbf{91.0} & \textbf{89.9} & \textbf{87.0} & \textbf{80.0} & \textbf{76.4} & \textbf{70.7} & \textbf{88.6} & \textbf{88.1} & \textbf{85.1} & \textbf{66.0} & \textbf{62.6} & \textbf{54.3} & 54.4 & 48.8 & \textbf{40.7} & \textbf{57.9} & \textbf{52.3} & \textbf{40.8} \\
        \bottomrule
    \end{tabular}
    \end{sc}
\end{table*}

Surprisingly, uniform sampling demonstrates strong performance while we hypothesized that biased distributions would be beneficial for training.
This shows that exploring connections which, at first, do not seem beneficial for training, can in fact be quite helpful.
It would be interesting for future research to investigate exactly which training dynamics are enabled by uniform sampling.
Since uniform sampling is also the most efficient distribution, we use it as our method in the following experiments.

\subsection{Comparison with related work}
\label{sec:comparison-related-work}

We compare our method with a wide range of sparsification methods in Table~\ref{tab:method-comparison} on CIFAR-10 and CIFAR-100. The best accuracy among the sparse training methods is in bold. The Static method denotes training a static random sparse model. 
The other baselines are Lottery \citep{frankle2018lottery}, Gradual \citep{zhu2017prune}, SNIP \citep{lee2018snip}, GraSP \citep{wang2019picking}, SynFlow \citep{tanaka2020pruning}, SET \citep{mocanu2018scalable}, RigL \citep{evci2020rigging}, and Top-KAST \citep{jayakumar2020top}. Our results use GSE with uniform sampling and $\gamma = 1$.

Among the sparse training methods (from Static downwards in Table~\ref{tab:method-comparison}) our method outperforms the other methods consistently over datasets, sparsities, and model architectures. While at 90\% sparsity all methods achieve comparable accuracy, at the extreme sparsity rate of 98\%, differences between methods become more evident. At 98\% sparsity our method outperforms all the before training sparsification methods by a significant margin, 4.7\% on average. 
GSE also consistently matches or exceeds both RigL and Top-KAST, while being strictly more efficient. For example, with VGG-16 on CIFAR-10, GSE achieves 1.4\% and 4.6\% higher accuracy than RigL and Top-KAST, respectively.

Gradually pruning connections during training (Gradual) can still improve the accuracy, especially in the extreme sparsity regimes. However, this comes at the cost of training the dense model, thus requiring significantly more FLOPs than our method. 
This observation serves as motivation for further research into dynamic sparse training methods that improve our accuracy while preserving our efficiency advantage.
Notably, our method outperforms Lottery on the CNN models, while Lottery requires multiple rounds of training.
In Appendix~\ref{sec:training-progress}, we report the test accuracy of each method at every epoch throughout training, showing that our method trains as fast as the other sparse training methods.

\subsection{ImageNet}
\label{sec:imagenet}

To determine the efficacy and robustness of GSE on large models and datasets, in this third experiment, we compare the accuracy of ResNet-50 trained on ImageNet at 90\% sparsity. 
In Table~\ref{tab:imagenet-accuracy}, the accuracy obtained with GSE is compared against the baselines from the preceding section, with the additions of DSR~\citep{mostafa2019parameter} and SNFS~\citep{dettmers2019sparse}. 
Since this is a common benchmark among sparse training methods, the accuracy results of the baselines were obtained from prior publications. 

\begin{table}[h]
    \footnotesize
    \centering
    \caption{ResNet-50 accuracy on ImageNet at 90\% sparsity.}
    \label{tab:imagenet-accuracy}
    \begin{sc}
    \begin{tabular}{l|l}
    \toprule
        Method & Accuracy \\
    \midrule
        Dense & 76.8±0.09 \\
        Lottery & 69.4±0.03 \\
        Gradual & 73.9 \\
        \midrule
        \midrule
        Static & 67.7±0.12 \\
        SNIP & 67.2±0.12 \\
        GraSP & 68.1 \\
        SET & 69.6±0.23 \\
        DSR & 71.6 \\
        SNFS & 72.9±0.06 \\
        RigL & 73.0±0.04 \\
        Top-KAST & 73.0 \\
        GSE (\textit{ours}) & \textbf{73.2±0.07} \\
    \bottomrule
    \end{tabular}
    \end{sc}
\end{table}

Our experiments use the aforementioned hyperparameters with the following exceptions: we train for 100 epochs and use learning rate warm-up over the initial 5 epochs, gradually reaching a value of 0.1; 
subsequently, the learning rate is reduced by a factor of 10 every 30 epochs; 
and we use label smoothing with a coefficient of 0.1. 
Our results use GSE with uniform sampling and $\gamma = 2$. 
The results in Table~\ref{tab:imagenet-accuracy} show a 0.2\% improvement in model accuracy, which is the largest increase since the introduction of SNFS \citep{dettmers2019sparse} in 2019 and sets a new state-of-the-art in dynamic sparse training. These results align with those for CIFAR, and further underscore GSE's performance over other sparse training methods. Note that GSE not only obtains a higher accuracy, it is also more efficient than methods such as SNFS, RigL, and Top-KAST.
These results collectively reinforce GSE's robustness and its ability to scale to large models and datasets.

\subsection{Model scaling}
\label{sec:value-dimensions}

In this experiment, we investigate how the size of a model affects the accuracy when the number of active connections is kept constant. We scaled the number of filters in convolution layers and units in feedforward layers to obtain a wider model. 
The results, presented in Table~\ref{tab:model-scaling}, show that training wider CNN models translates to significantly higher accuracy, 6.5\% on average at 4 times wider. These findings support those of Zhu et al. \citep{zhu2017prune} and Kalchbrenner et al. \citep{kalchbrenner2018efficient}, however, our method enables a sparse model to be trained whose dense version would be too computationally intensive.

\begin{table}[h]
    \centering
    \caption{Accuracy on CIFAR-100 while simultaneously increasing the model width and sparsity. The models in each row have a fixed number of active connections.}
    \label{tab:model-scaling}
    \begin{sc}
    \begin{tabular}{l|cccc}
    \toprule
        Width / Sparsity & 1x / 0\% & 2x / 75\%  & 3x / 89\% & 4x / 94\% \\
    \midrule
        ResNet-56 & 70.1±0.4 & 72.3±0.4 & 73.8±0.6 & 74.2±0.3 \\
        VGG-16 & 62.3±0.8 & 68.4±0.7 & 70.4±0.8 & 71.2±0.3 \\
        ViT & 58.8±0.5 & 58.3±0.2 & 57.3±0.7 & 56.6±0.1 \\
    \bottomrule
    \end{tabular}
    \end{sc}
\end{table}

The same trend does not hold for ViT, as its accuracy decreases with increasing scale.
Appendix~\ref{sec:scale-training-progress} reveals that the accuracy increases slower for larger ViT models, while for CNNs the accuracy increases faster as they get larger.
Moreover, the accuracy of larger dense ViT models plateaus at 62.2\%, as shown in Appendix~\ref{sec:scaling-dense-vit}, indicating that this phenomenon is independent of our method.
ViT models are known to require large amounts of training data to achieve optimal accuracy~\citep{dosovitskiy2020image}, we conjecture that this contributes to the trend observed with ViT.

\subsection{Extended training}
\label{sec:extended-training}

We continued our experimentation by looking at the effect of extended training.
The following extended training results are for CIFAR-100 at 98\% sparsity with 1x, 1.5x, and 2x the number of training epochs. The learning rate drops and the final prune-grow step are also scaled by the same amount, all other hyperparameters stay the same. We observe that ViT and CNN models benefit considerably from extended training, with an average increase in accuracy of 5.3\% for training twice as long. The ViT model in particular gains 7\% in accuracy by training twice as long.

\begin{table}[h]
    \centering
    \caption{Accuracy of extended training on CIFAR-100 at 98\% sparsity.}
    \label{tab:extended-training}
    \begin{sc}
    \begin{tabular}{l|ccc}
        \toprule
        Epochs & 200 & 300 & 400 \\
        \midrule
        ResNet-56 & 54.3±0.5 & 56.4±0.3 & 57.9±0.1 \\
        VGG-16 & 40.8±0.3 & 44.1±0.4 & 45.9±0.2 \\
        ViT & 40.7±1.0 & 45.3±1.1 & 47.8±0.5 \\
        \bottomrule
    \end{tabular}
    \end{sc}
\end{table}

\subsection{Floating-point operations}
\label{sec:flops}

Lastly, to determine the training efficiency of GSE, we compare its FLOPs with those of RigL in Figure~\ref{fig:flops-ratio}. 
FLOPs are determined for ResNet-50 on ImageNet using the method described by Evci et al. \citep{evci2020rigging}, with $\gamma = 1$.
The results show that the benefit of GSE increases rapidly at high sparsity levels. At 99\% sparsity, GSE uses 13.3\% fewer FLOPs than RigL, up from 2.4\% at 95\% sparsity. 

\begin{figure}[h]
    \centering
    \scalebox{1}{\subimport*{figures/}{flops-ratio-3.pgf}}
    \caption{
    Improvement in training FLOPs by GSE compared to RigL for ResNet-50 on ImageNet.
    }
    \label{fig:flops-ratio}
\end{figure}

\section{Conclusion}
\label{sec:conclusion}

We presented an always-sparse training algorithm using guided stochastic exploration of the connections which has a linear time complexity with respect to the model width and improves upon the accuracy of previous sparse training methods. 
This is achieved by only evaluating gradients for a subset of randomly sampled connections when changing the connections. 
The three distributions for sampling the subset---uniform, an upper bound of the gradient magnitude, and an estimate of the gradient magnitude---all showed similar trends. 
However, uniform consistently achieves among the highest accuracy and is also the most efficient. 
We therefore conclude that the uniform distribution is the most appropriate for sparse training. 
Interestingly, we showed that training larger and sparser CNN models results in higher accuracy for the same number of active connections. 
Moreover, it was observed that our method greatly benefits from extended training.
Lastly, the number of floating-point operations of our method decrease considerably faster than those for RigL as the sparsity increases.
This observation underscores the efficiency and scalability of our method, which is promising for training increasingly larger and sparser ANNs.

\bibliographystyle{unsrtnat}
{\linespread{0.9}\selectfont\footnotesize\bibliography{references_compact}}

\newpage
\onecolumn
\appendix

\subsection{Connection sampling distributions}
\label{sec:sampling-distributions}

In this section we provide a more detailed discussion on the distributions which were considered for sampling the subset of inactive connections.

\subsubsection{Gradient magnitude upper bound (GraBo)}
\label{sec:grabo}

The gradient magnitude for a single connection $(a, b) \in W$ can be expressed as follows:
\begin{align}
     \abs{\nabla \theta_{l}(a,b)} &= \abs{\sum_{i=1}^B \vh_{l-1}(a, i)\vdelta_{l}(b, i)}\\ \implies \abs{\nabla \theta_{l}} &= \abs{\vh_{l-1}\trans{\vdelta_{l}}}
\end{align}
where $\vh_{l}$ and $\vdelta_{l}$ are the activations and gradients of the loss at the output units of the $l$-th layer, respectively. When the batch size $B$ is one, the sum disappears, simplifying the gradient magnitude to a rank one matrix which can be used to sample connections efficiently,
inducing a joint probability distribution that is proportional to the gradient magnitude only when the batch size is one. In practice, however, training samples come in mini batches to reduce the variance of the gradient. 
Therefore, we experiment with sampling the connections proportionally to the following upper bound of the gradient magnitude instead, which enables efficient sampling, even with mini batches:
\begin{align}
    \abs{\nabla \theta_{l}} = \abs{\vh_{l-1}\trans{\vdelta_{l}}} \leq \left(\abs{\vh_{l-1}} \vone\right) \trans{\left(\abs{\vdelta_{l}} \vone\right)} \label{eq:grabo}
\end{align}
The proof for Equation~\ref{eq:grabo} uses the triangle inequality and is provided in Appendix~\ref{sec:proofs}. This upper bound has the property that it becomes an equality when $B=1$. Connections are then sampled from the probability distributions in Equation~\ref{eq:grabo-dist}. The implementation of this distribution does not require any modifications to the model, and only needs a single forward and backward pass to evaluate.
\begin{align}
    \rf_{l} \coloneqq \frac{\abs{\vh_{l-1}}\vone}{\trans{\vone}\abs{\vh_{l-1}}\vone}, \quad \rg_{l} \coloneqq \frac{\abs{\vdelta_{l}}\vone}{ \trans{\vone}\abs{\vdelta_{l}}\vone} \label{eq:grabo-dist}
\end{align}

\subsubsection{Gradient magnitude estimate (GraEst)}
\label{sec:ams}

The reason we use a bound on the absolute gradient is that in general the sum of products cannot be written as the product of sums, that is, $\sum_i x_iy_i \neq \left(\sum_i x_i\right) \left(\sum_i y_i\right)$. However, it is possible to satisfy this equality in expectation. This was shown by Alon et al. \citep{alon1996space} who used it to estimate the second frequency moment, or the $\normltwo$ norm of a vector. Their work is called the AMS sketch, AMS is an initialism derived from the last names of the authors. They then showed that this also works in the more general case to estimate inner products between vectors \citep{alon1999tracking}. This is achieved by assigning each index of the vector a random sign, i.e. $\evs_i \sim \mathcal{U}\{-1, +1\}$, resulting in the following equality:
\begin{align}
    \sum_i x_iy_i = \E\left[\left(\sum_i \evs_ix_i\right)\left(\sum_i \evs_iy_i\right)\right] \label{eq:ams}
\end{align}
We can use their finding to construct probability distributions $\rf_{l}$ and $\rg_{l}$ in Equation~\ref{eq:ams-dist} such that their induced joint probability distribution is proportional to the gradient magnitude in expectation. This distribution can also be computed in a single forward and backward pass and requires no modifications to the model, apart from introducing a mechanism to sample the random signs $\vs$.
\begin{align}
        \rf_{l} \coloneqq \frac{\abs{\vh_{l-1}\vs}}{\trans{\vone}\abs{\vh_{l-1}\vs}}, \quad \rg_{l} \coloneqq \frac{\abs{\vdelta_{l}\vs}}{ \trans{\vone}\abs{\vdelta_{l}\vs}} \label{eq:ams-dist}
\end{align}

\subsection{Gradient magnitude upper bound proof}
\label{sec:proofs}

\begin{lemma}
The gradient magnitude of the loss with respect to the parameters of a feedforward layer $l$ has the following upper bound:
\begin{align*}
    \abs{\nabla \theta_{l}} \leq \left(\abs{\vh_{l-1}} \vone\right) \trans{\left(\abs{\vdelta_{l}} \vone\right)}
\end{align*}
where $\theta_{l} \in \mathbb{R}^{n_{l-1}\times n_{l}}$, $\vh_{l} \in \mathbb{R}^{n_{l}\times B}$, and $\vdelta_{l}  \in \mathbb{R}^{n_{l} \times B}$ are the weight matrix, activations and gradients of the loss at the output units of the $l$-th layer, respectively. The number of units in the $l$-th layer is denoted by $n_{l} \in \mathbb{N}^{+}$, and $B \in \mathbb{N}^{+}$ is the batch size.
\end{lemma}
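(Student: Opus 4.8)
The plan is to reduce the matrix inequality to a scalar (entrywise) inequality and then bound each entry using the triangle inequality. First I would recall from the earlier derivation that the gradient of the loss with respect to the weight matrix of layer $l$ is the outer-product-like sum $\nabla \theta_{l} = \vh_{l-1}\trans{\vdelta_{l}}$, so that the $(a,b)$ entry is $\nabla \theta_{l}(a,b) = \sum_{i=1}^{B} \vh_{l-1}(a,i)\,\vdelta_{l}(b,i)$. Since the claimed inequality is between two matrices of the same shape $n_{l-1}\times n_{l}$ and the right-hand side is an outer product of nonnegative vectors, I would interpret $\leq$ entrywise and argue that it suffices to verify it for an arbitrary fixed pair of indices $(a,b)$.

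The key step is then a single application of the triangle inequality to the sum over the batch index. For fixed $(a,b)$,
\begin{align*}
    \abs{\nabla \theta_{l}(a,b)}
    = \abs{\sum_{i=1}^{B} \vh_{l-1}(a,i)\,\vdelta_{l}(b,i)}
    \leq \sum_{i=1}^{B} \abs{\vh_{l-1}(a,i)}\,\abs{\vdelta_{l}(b,i)}.
\end{align*}
Next I would observe that the right-hand side can be relaxed by decoupling the two factors: since all terms are nonnegative,
\begin{align*}
    \sum_{i=1}^{B} \abs{\vh_{l-1}(a,i)}\,\abs{\vdelta_{l}(b,i)}
    \leq \left(\sum_{i=1}^{B}\abs{\vh_{l-1}(a,i)}\right)\left(\sum_{j=1}^{B}\abs{\vdelta_{l}(b,j)}\right),
\end{align*}
because the expanded product on the right contains every term of the left-hand sum (the diagonal $i=j$ terms) plus additional nonnegative cross terms. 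Recognizing that $\sum_{i}\abs{\vh_{l-1}(a,i)}$ is exactly the $a$-th entry of the vector $\abs{\vh_{l-1}}\vone$, and similarly $\sum_{j}\abs{\vdelta_{l}(b,j)}$ is the $b$-th entry of $\abs{\vdelta_{l}}\vone$, this is precisely the $(a,b)$ entry of the outer product $\left(\abs{\vh_{l-1}}\vone\right)\trans{\left(\abs{\vdelta_{l}}\vone\right)}$, which completes the argument once I note that $(a,b)$ was arbitrary.

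I do not anticipate a genuine obstacle here, as the result is a routine two-step bound; the only thing requiring care is being explicit that the matrix inequality is meant entrywise and that the decoupling step is valid precisely because all quantities inside the absolute values are nonnegative (so no cancellation can make the cross terms harmful). I would also remark, to connect with the surrounding text, that both inequalities become equalities when $B=1$: the triangle inequality is tight because there is a single term, and the decoupling introduces no cross terms, which recovers the stated property that the bound is exact in the single-sample case.
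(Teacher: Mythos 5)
Your proof is correct and takes essentially the same approach as the paper's: an entrywise reduction, one application of the triangle inequality to the batch sum, and then decoupling the sum of products into the product of sums by observing that the discarded cross terms are nonnegative. Your added remarks on the entrywise interpretation and on equality at $B=1$ are consistent with the paper's own presentation.
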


\begin{proof}
The proof starts with the definition of the gradient of the feedforward layer, which is calculated during back propagation as the activations of the previous layer multiplied with the back propagated gradient of the output units of the layer. We then use the triangle inequality to specify the first upper bound in Equation~\ref{eq:proof-2}.
\begin{align}
    \abs{\nabla \theta_{l}(a,b)} &= \abs{\sum_{i=1}^B \vh_{l-1}(a, i)\vdelta_{l}(b, i)}\\ &\leq \sum_{i=1}^B \abs{\vh_{l-1}(a, i)\vdelta_{l}(b, i)} \label{eq:proof-2}\\
    &= \sum_i^B \abs{\vh_{l-1}(a, i)} \sum_i^B \abs{\vdelta_{l}(b,i)} \\ & \qquad - \sum_{r, s \neq r}^B \abs{\vh_{l-1}(a,r)\vdelta_{l}(b,s)} \\&\leq \sum_i^B \abs{\vh_{l-1}(a,i)} \sum_i^B \abs{\vdelta_{l}(b,i)} \label{eq:proof-3}
\end{align}
This implies that $\abs{\nabla \theta_{l}} \leq \left(\abs{\vh_{l-1}} \vone\right) \trans{\left(\abs{\vdelta_{l}} \vone\right)}$.
\end{proof}

\subsection{Data normalization}
\label{sec:data-normalization}

We normalize all the training and test data to have zero mean and a standard deviation of one. The dataset statistics that we used are specified below, where $\mu$ is the mean and $\sigma$ the standard deviation. The values correspond to the red, green, and blue color channels of the image, respectively.

\paragraph{CIFAR-10}
\begin{align*}
    \mu= (0.491, 0.482, 0.447), \quad \sigma = (0.247, 0.243, 0.262)
\end{align*}

\paragraph{CIFAR-100}
\begin{align*}
    \mu= (0.507, 0.487, 0.441), \quad \sigma = (0.267, 0.256, 0.276)
\end{align*}

\paragraph{ImageNet}
\begin{align*}
    \mu= (0.485, 0.456, 0.406), \quad \sigma = (0.229, 0.224, 0.225)
\end{align*}

\subsection{Data augmentation}
\label{sec:data-augmentation}

We applied standard data augmentation as part of our training data pipeline. The specific augmentation techniques are specified below. We apply the data normalization described in Appendix~\ref{sec:data-normalization} after the augmentation transformations.

\paragraph{CIFAR-10}
We pad the image with 4 black pixels on all sides and randomly crop a square of 32 by 32 pixels. We then perform a random horizontal flip of the crop with a probability of 0.5.

\paragraph{CIFAR-100}
We pad the image with 4 black pixels on all sides and randomly crop a square of 32 by 32 pixels. We then perform a random horizontal flip of the crop with a probability of 0.5.

\paragraph{ImageNet}
We randomly crop a square of 224 by 224 pixels. We then perform a random horizontal flip of the crop with a probability of 0.5. For the test set, we resize the images to be 256 pixels followed by a square center crop of 224 pixels.

\clearpage
\subsection{Convolutional layer subset sampling}
\label{sec:conv-layer-subset}

In this section we describe how our growing and pruning algorithm can be applied to convolutional layers.
While a weight in a feedforward layer is specified by the input and output units that it connects, a weight in a 2D convolutional layer is specified by the input channel, its $x$ and $y$ coordinate on the filter, and the output channel. It thus seems that it would require four discrete probability distributions to sample a weight of a 2D convolutional layer, instead of two for feedforward layers. However, one can alternatively view a convolutional layer as applying a feedforward layer on multiple subsections (patches) of the input, flattening the input channels and filters makes the weights of a convolutional layer identical to a feedforward layer. The multiple subsections of the input can then simply be treated as additional batches. With this realization all the methods that we explain throughout the paper for feedforward layers directly apply to convolutional layers.

\subsection{Sparse graph initialization comparison}
\label{sec:layer-sparsity-distribution}

In Figure~\ref{fig:sparsity-assignment}, we plot the accuracy obtained by assigning the sparsity uniformly over all the layers compared to using the Erdős–Rényi initialization for SET, RigL, and our method. Note that Ours Uniform (Erdős–Rényi) denotes uniform sampling of the subset connections and the Erdős–Rényi initialization. The Erdős–Rényi initialization obtains better accuracy while ensuring that the number of active connections per layer is proportional to the layer width.

\begin{figure*}[h]
    \centering
    \scalebox{0.9}{\subimport*{figures/}{sparsity-assignment.pgf}}
    \caption{Accuracy comparison of uniform and Erdős–Rényi sparsity assignment.}
    \label{fig:sparsity-assignment}
\end{figure*}

\clearpage
\subsection{Pseudocode}
\label{sec:pseudocode}

We provide pseudocode for our complete dynamic sparse training process in the case of the GraBo or GraEst distributions in Algorithm~\ref{alg:full-dst}. The uniform distribution is simpler as it does not require any aggregation of activity. For simplification, the pseudocode assumes that the prune-grow step only uses a single batch of training samples, allowing the aggregation for the probability distributions $\rf$ and $\rg$ to be computed in the same pass as the generation of the various connection sets. This is also the scenario we tested with in all our experiments.

\begin{algorithm}[h]
\caption{Efficient periodic growing and pruning of connections}\label{alg:full-dst}
\begin{algorithmic}
\STATE {\textbf{Input:}} Network $f_\theta$, dataset $\mathcal{D}$, loss function $\mathcal{L}$, prune-grow schedule $T, T_{\mathrm{end}}, \alpha$, number of active connections multiplier $\epsilon$, and subset sampling factor $\gamma$.
\FOR{each layer $l$}
\STATE $\sA \gets \mathrm{sample\_graph}(n^{[l-1]}, n^{[l]}; \epsilon)$
\STATE $\theta \gets \mathrm{sample\_weights}(\sA)$
\ENDFOR
\FOR{each training step $t$} 
\STATE $\vx, \vy \sim \mathcal{D}$
\IF{$t \bmod{T} = 0$ \textbf{and} $t \leq T_{\mathrm{end}}$}
\STATE $\vh \gets \vx$
\FOR{each layer $l$}
\STATE $\va^{[l]} \gets \mathrm{aggregate}(\vh)$
\STATE $\vh \gets \mathrm{forward}(\vh)$
\ENDFOR
\STATE $\vg \gets \mathrm{grad}(\loss{\vh, \vy})$
\FOR{each layer $l$ in inverse order}
\STATE $\vb^{[l]} \gets \mathrm{aggregate}(\vg)$
\STATE $\rf \gets \mathrm{normalize}(\va^{[l]})$
\STATE $\rg \gets \mathrm{normalize}(\vb^{[l]})$ 
\STATE $\sS \gets \mathrm{sample\_connections}(\rf, \rg, \ceil{\gamma\abs{\sA}}) \setminus \sA$
\STATE $\alpha_t \gets \mathrm{cosine\_decay}(t; \alpha, T_{\mathrm{end}})$
\STATE $k \gets \min(\ceil{\alpha_t\abs{\sA}}, \abs{\sS})$
\STATE $\sG \gets \mathrm{topk}(\abs{\mathrm{grad}(\sS)}, k)$
\STATE $\sP \gets \mathrm{topk}(-\abs{\theta}, k)$
\STATE $\sA \gets (\sA \setminus \sP) \cup \sG$
\STATE $\theta \gets \mathrm{update\_weights}(\theta, \sA, \sP, \sG)$
\STATE $\vg \gets \mathrm{backward}(\vg)$
\ENDFOR
\ELSE
\STATE $\mathrm{SGD}(\theta, \loss{f_\theta(\vx), \vy})$
\ENDIF
\ENDFOR
\end{algorithmic}
\end{algorithm}

\subsection{Training progress}
\label{sec:training-progress}

In Figure~\ref{fig:train-progress}, we provide plots for the test accuracy during training for each baseline method and our method. It can be seen that our method consistently achieves among the highest test accuracy throughout training compared to the sparse training methods. The training progress of Gradual stands out because its accuracy increases the most at first, this is because it starts out as a dense model while it gradually increases the model sparsity, as a result, the accuracy goes down towards the second learning rate drop when the target sparsity is reached.

\begin{figure*}[h]
    \centering
    \scalebox{0.8}{\subimport*{figures/}{method-progress-s98.pgf}}
    \caption{Training progress in test accuracy compared to related work at 98\% sparsity.}
    \label{fig:train-progress}
\end{figure*}

\subsection{Model scaling training progress}
\label{sec:scale-training-progress}

In Figure~\ref{fig:scale-train-progress}, we provide plots for the test and training accuracy during training for our method on CIFAR-100 while increasing the width of the model by 1x, 2x, 3x, and 4x. The number of active connections is kept constant between all experiments, see Section~\ref{sec:value-dimensions} for details. In the CNN models, ResNet-56 and VGG-16, the increase in width causes the models to train significantly faster and obtain higher accuracy. For the vision transformer (ViT) model, the trend is reversed, wider models train slower and obtain lower accuracy.

\begin{figure*}[h]
    \centering
    \scalebox{1}{\subimport*{figures/}{width-training-cifar100.pgf}}
    \caption{Training progress in train and test accuracy for various model widths on CIFAR-100.}
    \label{fig:scale-train-progress}
\end{figure*}

\subsection{Scaling dense ViT}
\label{sec:scaling-dense-vit}

To further investigate the behavior of ViT, which decreases in accuracy when using larger models with the same number of activate parameters, as discussed in Section~\ref{sec:value-dimensions} and Appendix~\ref{sec:scale-training-progress}. In Table~\ref{tab:scaling-dense-vit}, we report the accuracy of ViT for increasing model width but kept dense, thus increasing the active connections count. We see the accuracy quickly plateaus at 62.2\%, gaining only 2\% accuracy while even the sparse CNNs gained 6.5\% on average at 4 times wider. This indicates that the decreasing accuracy of ViT reported in Section~\ref{sec:value-dimensions} is not due to our sparse training method as even the dense model does not benefit much from increased scale.

\begin{table}[h]
    \centering
    \caption{Accuracy of scaling dense ViT on CIFAR-100}
    \label{tab:scaling-dense-vit}
    \begin{sc}
    \begin{tabular}{l|cccc}
    \toprule
	Model width & 1x & 2x & 3x & 4x \\
    \midrule
    Accuracy & 60.3±0.6 & 62.5±0.5 & 62.2±0.4 & 62.2±0.1 \\		
    \bottomrule
    \end{tabular}
    \end{sc}
\end{table}

\subsection{Sparse matrix multiply benchmark}
\label{sec:sparse-benchmark}

In Figure~\ref{fig:sparse-ops-timing}, we show the execution time of the sparse times dense matrix-matrix multiply on two execution devices: Intel Xeon Gold 6148 CPU at 2.40GHz and NVIDIA Tesla V100 16GB. The sparse matrix is a square matrix of size units times units, representing an unstructured sparse random weight matrix. The dense matrix is of size units times 128 and represents a batch of layer inputs. We compare three storage formats, namely: dense, where all the zeros are explicitly represented; Coordinate format (COO), which represents only the nonzero elements and their coordinates; and Compressed Sparse Row (CSR), which is similar to COO but compresses the row indices. Each data point in Figure~\ref{fig:sparse-ops-timing} is the average of 10 measurements. 

\begin{figure*}[h]
    \centering
    \scalebox{0.9}{\subimport*{figures/}{sparse-ops-timing.pgf}}
    \caption{Execution time of sparse times dense matrix-matrix multiply.}
    \label{fig:sparse-ops-timing}
\end{figure*}

The results show that the CSR format generally outperforms the COO format. Moreover, the execution time of the CSR format improves upon the dense execution time starting around 90\% sparsity. At 99\% sparsity, the CSR format is up to an order of magnitude faster than the dense format. These results are similar between the two execution devices, although the NVIDIA Tesla V100 16GB is up to two orders of magnitude faster than the Intel Xeon Gold 6148 CPU at 2.40GHz.

\end{document}